\def\eqref#1{equation~\ref{#1}}
\def\1{\bm{1}}
\def\vx{{\bm{x}}}
\def\vy{{\bm{y}}}
\def\vz{{\bm{z}}}
\DeclareMathAlphabet{\mathsfit}{\encodingdefault}{\sfdefault}{m}{sl}
\SetMathAlphabet{\mathsfit}{bold}{\encodingdefault}{\sfdefault}{bx}{n}
\def\sA{{\mathbb{A}}}
\def\sC{{\mathbb{C}}}
\def\sZ{{\mathbb{Z}}}
\newcommand{\Ls}{\mathcal{L}}
\newcommand*{\eg}{e.g.\@\xspace}
\newtheorem{theorem}{Theorem}
\title{Improving Model Compatibility of\\Generative Adversarial Networks by\\Boundary Calibration}
\author{%
  Si-An Chen
 \\
  National Taiwan University\\
  \texttt{d09922007@csie.ntu.edu.tw} \\
   \And
   Chun-Liang Li \\
   Carnegie Mellon University \\
   \texttt{chunlial@cs.cmu.edu} \\
   \AND
  Hsuan-Tien Lin \\
  National Taiwan University\\
   \texttt{htlin@csie.ntu.edu.tw} \\
}
\begin{document}

\maketitle

\begin{abstract}
  Generative Adversarial Networks (GANs) is a powerful family of models that learn an underlying distribution to generate synthetic data. Many existing studies of GANs focus on improving the realness of the generated image data for visual applications, and few of them concern about improving the quality of the generated data for training other classifiers---a task known as the model compatibility problem. Literature also show that some GANs often prefer generating `easier' synthetic data that are far from the boundaries of the classifiers, and refrain from generating near-boundary data, which are known to play an important roles in training the classifiers. To improve GAN in terms of model compatibility, we propose Boundary-Calibration GANs (BCGANs), which leverage the boundary information from a set of pre-trained classifiers using the original data. In particular, we introduce an auxiliary Boundary-Calibration loss (BC-loss) into the generator of GAN to match the statistics between the posterior distributions of original data and generated data with respect to the boundaries of the pre-trained classifiers. The BC-loss is provably unbiased and can be easily coupled with different GAN variants to improve their model compatibility. Experimental results demonstrate that BCGANs not only generate realistic images like original GANs but also achieves superior model compatibility than the original GANs.
\end{abstract}

\section{Introduction}
The success of machine learning relies on not only the advances of different models (\eg deep learning) but also data with sufficient quality and quantity.
Nowadays, companies spend tremendous efforts and expense collecting data to build their products.
To better solve complicated real-world problems with public or third-party machine learning experts, many companies now needs release some data sets for competitions (\eg Kaggle) or proof-of-concept purposes.
However, considering the costs of collecting data, companies may not be willing to release the dataset if possible.
As a result, a technique which can generate synthetic data with properties similar to the original data is in demand. 
To be specific, we are looking for generating a dataset with
the property that machine learning models trained on the generated dataset can exhibit similar performance to ones trained
on the original data. This property is called \textit{model compatibility} \cite{DBLP:journals/pvldb/ParkMGJPK18} or
\textit{machine learning efficacy}~\cite{xu2019modeling}. The
organizations can share the generated data with high model compatibility to the public and
enjoy the solution derived from it without leaking the real dataset.

When it comes to data generation, generative adversarial networks (GANs; \citenum{DBLP:conf/nips/GoodfellowPMXWOCB14})
is a popular family of generative algorithms because of its impressive performance on generating realistic
images~\cite{DBLP:conf/iclr/KarrasALL18}.
In GANs, the generator is trained via minimizing a neural network (discriminator) defined probability
divergence~\cite{DBLP:conf/nips/GoodfellowPMXWOCB14, DBLP:journals/corr/ArjovskyCB17, nowozin2016f}. 
In addition to image generation, GANs are also widely used in other applications, such as 
style transfer~\cite{DBLP:conf/cvpr/IsolaZZE17,
DBLP:conf/iccv/ZhuPIE17, DBLP:conf/icml/KimCKLK17} and image
processing~\cite{pathak2016context,ledig2017photo,rick2017one}, 
and generating different types of data, including
time series~\cite{luo2018multivariate,chang2019kernel}, text~\cite{DBLP:conf/aaai/YuZWY17,press2017language}, 
point clouds~\cite{li2018point}, voxels~\cite{wu2016learning} and tabular data~\cite{DBLP:journals/pvldb/ParkMGJPK18,
xu2019modeling}. 

Although GANs are versatile as mentioned above, most of their development focus on the metrics such as quality and
diversity of the data~\cite{DBLP:conf/nips/SalimansGZCRCC16,heusel2017gans,lucic2018gans}. 
Generating high model compatibility data via GANs is still under explored.  
The pioneering work~\cite{xu2019modeling} first shows that data generated from conditional GANs~\cite{mirza2014conditional}
enjoys better model compatibility than VAEs~\cite{kingma2013auto}.
It shows a promising potential of generating tabular data by GANs and has been used in generating privacy-sensitive data such as clinical data~\cite{QJC21} and insurance records~\cite{KK19}.


In this work we wonder that whether we can further improve the model compatibility by considering the boundary between classes, which can be approximated by the models trained on the original data.
For example, Wasserstein GAN (WGAN; \citenum{DBLP:journals/corr/ArjovskyCB17}) performs a
mean-matching between the distribution of real data and generated data.
However, only mean-matching is sometimes not enough to learn the whole distribution especially for those boundary cases. Apparently, if a GAN knows the boundary between different classes, it may be able to generate instances which are close to the boundary with correct labels. These boundary points will guide a classifier to learn the correct decision boundary.

In this work, we try to improve GANs with regards to model compatibility in classification problems.
We use a set of pre-trained classifiers to obtain multiple decision boundaries.
Then use an auxiliary loss function called Boundary-Calibration loss (BC-loss) to calibrate the  generating distribution according to the decision boundaries of these pre-trained classifiers.
The main contributions of this work are:

\begin{itemize}
	\item In Section~\ref{sec:form}, we propose a way to evaluate model compatibility in classification problems. We consider a variety of machine learning algorithms and average the  performance to obtain a comprehensive metric.
	\item In Section~\ref{sec:bmgan}, we propose a loss function called Boundary-Calibration loss  (BC-loss) which helps  typical
	GANs to learn a distribution with better model compatibility. The loss considers the decision boundaries of
	pre-trained classifiers and minimizes the maximum mean discrepancy (MMD; \citenum{DBLP:journals/jmlr/GrettonBRSS12})
	between the original dataset and the generated dataset. In addition, we show that optimizing the BC-loss would not
	change the optimal solution of the original GAN, but it reduces the feasible set to ensure the model compatibility.
	\item In Section~\ref{sec:exp}, we demonstrate how BC-loss affects the boundary of the generated data with a
	two-dimensional toy dataset. We also show that the BC-loss improves model compatibility of the generated data with different types of classifiers and a variety of datasets. Finally, we inspect the feature selection results to examine how the interpretation of machine learning models may be effected.
\end{itemize}

Last, in Section~\ref{sec:rel}, we discuss some works which are similar to our work and describe how does our work differ from them.

\section{Model compatibility in classification}
\label{sec:form}
In this work, we focus on generating data for fully-supervised classification learning. Given a dataset $D = \{(\vx_i, y_i)\}_{i=1}^{n}$, where $\vx_i \in \mathcal{X}$ represents features of an instance, $y_i=f(\vx_i) \in \mathcal{Y}$ represents the corresponding label of $\vx_i$ according $f: \mathcal{X} \rightarrow \mathcal{Y}$, and $(\vx_i, y_i) \sim P_D$, a learning algorithm $A: (\mathcal{X}, \mathcal{Y})^n \rightarrow \mathcal{H}$ learns a hypothesis $h \in \mathcal{H}$ to approximate the mapping function, i.e. $A(D) = h \approx f$. Our goal is to obtain a generator $G$ which generates a synthetic dataset $D' = \{(\vx_j', y_j')\}_{j=1}^{m}$ such that $A(D') = h' \approx h$. We call this property model compatibility as proposed in \citealt{DBLP:journals/pvldb/ParkMGJPK18}.

To measure the model compatibility of a generated dataset quantitatively, we consider the performance of a classifier
trained  on the generated dataset comparing to  the one trained on the real dataset. We evaluate the  accuracy on a
separate test dataset to indicate the performance of a given classifier. In addition, we calculate relative accuracy by
scaling the test accuracy of the classifier trained on the generated dataset by the accuracy of the classifier trained
on the real dataset. The relative accuracy allows us to average the results from different machine learning algorithms
more fairly. The final evaluation is :
\begin{equation}
\frac{1}{|\sA|} \sum_{A \in \sA} \frac{acc(h', D^{(t)})}{acc(h, D^{(t)})},
\end{equation}
where $\sA$ is a set of learning algorithms , $D^{(t)} = \{(x_i^{(t)}, y_i^{(t)})\}_{i=1}^N$ is the test dataset, and $acc(h, D^{(t)})$ is the accuracy of hypothesis $h$ on test data $D^{(t)}$.
We can determine $\sA$ as a wide variety of learning algorithms to make the metric provide a more comprehensive measurement of model compatibility.

\section{Related work}
\label{sec:rel}
Research about generating data for classification can be divided into two categories: formulation and architecture.
For formulation, Conditional GAN (CGAN; \citenum{mirza2014conditional} ) is an intuitive way to generate instances with
corresponding labels. We can learn the distribution of labels by counting and sample the instances from CGAN
conditionally. Auxiliary Classifier GAN (ACGAN; \citenum{DBLP:conf/icml/OdenaOS17}) is considered as a better way for conditional generation. It uses an auxiliary classifier to provide information about the boundary between each classes. However, ACGAN has been proved that the objective is biased so it tends to generate data with lower entropy for the auxiliary classifier~\cite{shu2017ac}. Thus, the lose of instances near the decision boundary may worsen the model compatibility. In this work, we use CGAN along with the proposed BC-loss to generate data with model compatibility.

On the other hand, the other line of research focuses on generating data with different network architecture or data
processing procedure. Recent works that also consider model compatibility are Table
GAN~\cite{DBLP:journals/pvldb/ParkMGJPK18} and Tabular GAN~\cite{xu2019modeling}. Table GAN focuses on the privacy of
generated data and thus their is a trade off between privacy and model compatibility. To achieve privacy preserving,
they do not improve the model compatibility compared to the original GAN. On the other hand, Tabular GAN puts emphasis
on increasing model compatibility of generated data. They propose a framework with a more complicated data processing
procedure and use LSTM to better parameterize the target distribution. In contrast to these works, our work focus on the
formulation of GANs and can be applied to most variants of GANs, including Table-GAN and Tabular GAN. Moreover, while these former works only focus on tabular data, our BC-loss is applicable to generate image datasets as well.

Some GAN variants are named similarly to our work but they pay attention to different problems. For example, the boundary
described in boundary-seeking GAN~\cite{hjelm2017boundary} means the decision boundary of the discriminator rather than
the decision boundary for the supervised labels. To the best of our knowledge, we are the first work trying to improve model
compatibility by modifying the formulation of GANs.

\section{Boundary-Calibration GAN}
\label{sec:bmgan}

To achieve better model compatibility of GAN, we propose an auxiliary GAN loss which we call boundary-calibration loss (or BC-loss). We assume that we have a set of pre-trained classifiers which are well-trained on the original dataset.
The BC-loss helps GANs to calibrate the distribution with respect to the distribution of decision values predicted by
pre-trained classifiers. The calibration leads to more accurate data generation near the decision boundary and thus
enabling a machine learning algorithm to learn a similar hypothesis to one that learns from the original dataset. To
ease the complexity of learning to generate  $(\vx, y)$ jointly, we infer $P(y)$ by counting the proportion of each
class in the original dataset and train a conditional generator $G$ such that 
$G(\vz, y)\sim P_{\mathcal{X}|y}$, where $P_{\mathcal{X}|y}$ is the conditional data distribution and $\vz\sim P_\mathcal{Z}$ is the initial randomness such as Gaussian distribution.
Therefore we can generate $(\vx, y)$ by sampling $y \sim P(y)$ and $G(\vz, y)$.

\subsection{Boundary Calibration}
Given a pre-trained classifier $C$, we hope the generated dataset will adopt the same statistics as the original dataset
while considering the decision boundary of $C$. To include the information about the boundary, we calculate posterior $P_C(y \mid \vx_i)$
	from the decision values predicted by the classifier. In practice, we can apply a softmax function to the outputs of a classifier to obtain the posterior. The posterior provides information of an instance from the
	classifier's aspect. Therefore, given the real dataset $X = \{\vx_1, \vx_2, ..., \vx_n\}$, we can obtain a set of posterior vector $C(X) = (P_C(y \mid \vx_1), P_C(y \mid \vx_2), ..., P_C(y \mid \vx_n))$. To generate data  $X'$ with the same distribution of posteriors to the boundary, we match the statistics of  $C(X)$ and $C(X')$ by optimizing a distance $M$:
\begin{equation}
 \Ls_{BC}(X, X', C) = M(C(X), C(X'))
\end{equation}

Here $M$ can be any distance metric which measures the distance between two sets of samples. In statistics,  distinguishing whether two sets of samples are from the same distribution is called \textit{Two-Sample Test}. A classical solution to two-sample test is kernel maximum mean discrepancy (MMD; \citenum{DBLP:journals/jmlr/GrettonBRSS12}). The idea is to compare the statistics between the two sets of samples. If the statistics are similar then these two sets might be sampled from the same distribution. Given two sets of samples $X=\{\vx_i\}_{i=1}^n$ and $Y=\{\vy_j\}_{j=1}^n$, an unbiased estimator of MMD with kernel $k$ is defined as:
\begin{equation}
\hat{M}_k(X, Y) =\frac{1}{\binom{n}{2}} \sum_{i \neq i'} k(\vx_i, \vx_{i'}) - \frac{2}{\binom{n}{2}} \sum_{i \neq j} k(\vx_i, \vy_j) + \frac{1}{\binom{n}{2}} \sum_{j \neq j'} k(\vy_j, \vy_{j'})
\end{equation}
In practice, we use Gaussian kernel $k(\vx, \vx')=\exp(\|\vx - \vx'\|^2)$ in MMD since Gaussian kernel is a characteristic kernel which ensures that the distance is zero if and only if the two distributions are the same \cite{DBLP:journals/jmlr/GrettonBRSS12}.
 
The BC-loss can be applied in generator of any GAN variants to improve the model compatibility. In addition, to better
fit the real unknown boundary, we can use multiple classifiers to calibrate the distributions from different aspects. As
a result, for a loss function of generator $\Ls_G$, we can modify the loss to be:
\begin{equation}
\hat{\Ls}_{G} = \Ls_{G} + \frac{\lambda}{|\sC|}\sum_{C \in \sC} \hat{M}_k(C(X), C(G(\sZ,\mathcal{Y})))
\end{equation}
where $\sZ$ is a set of noises, $\sC$ is a set of pre-trained classifiers and $\lambda$ is a hyper-parameter to control the weight of BC-loss.

\subsection{Analysis of optimal solution}
Next we prove that adding our proposed BC-loss would not change the optimal solution of the original objective. Here we
assume the loss of the generator $\Ls_G$ achieves its optimal value in the its GAN objectives $\Ls_G$ if and only if
the distribution of $G(\vz, y)$ recovers $P_{\mathcal{X}|y}$ for all $y\in\mathcal{Y}$, which holds for the vanilla GAN~\cite{DBLP:conf/nips/GoodfellowPMXWOCB14} and most of other GAN variants.

\begin{theorem}[\citealt{DBLP:journals/jmlr/GrettonBRSS12}]\label{mmd}
Given a kernel $k$, if $k$ is a characteristic kernel, then $M_k(P, Q) = 0 \iff P = Q$.
\end{theorem}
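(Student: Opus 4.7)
The plan is to work inside the reproducing kernel Hilbert space (RKHS) $\mathcal{H}_k$ associated with the kernel $k$, and reduce the statement to the defining property of a characteristic kernel. First I would recall the mean embedding $\mu_P = \mathbb{E}_{X\sim P}[k(X,\cdot)] \in \mathcal{H}_k$, which by the reproducing property satisfies $\langle f,\mu_P\rangle_{\mathcal{H}_k} = \mathbb{E}_{X\sim P}[f(X)]$ for every $f\in\mathcal{H}_k$. A short calculation using the dual characterization $M_k(P,Q) = \sup_{\|f\|_{\mathcal{H}_k}\le 1}\bigl(\mathbb{E}_P[f(X)] - \mathbb{E}_Q[f(X)]\bigr)$ and Cauchy--Schwarz then gives the identity
\begin{equation*}
M_k(P,Q) \;=\; \|\mu_P - \mu_Q\|_{\mathcal{H}_k}.
\end{equation*}

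With that identity in hand, the $(\Leftarrow)$ direction is immediate: if $P=Q$, then $\mu_P=\mu_Q$, so the norm vanishes. For $(\Rightarrow)$, assume $M_k(P,Q)=0$; then $\|\mu_P-\mu_Q\|_{\mathcal{H}_k}=0$, forcing $\mu_P=\mu_Q$ in $\mathcal{H}_k$. By the very definition of a \emph{characteristic} kernel, the map $P\mapsto\mu_P$ from Borel probability measures into $\mathcal{H}_k$ is injective, so $\mu_P=\mu_Q$ implies $P=Q$.

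The main obstacle is really the first step: rigorously establishing $M_k(P,Q)=\|\mu_P-\mu_Q\|_{\mathcal{H}_k}$ requires that $\mu_P,\mu_Q$ actually exist as Bochner integrals in $\mathcal{H}_k$, which demands the integrability condition $\mathbb{E}_{X\sim P}\sqrt{k(X,X)}<\infty$ (automatic for the bounded Gaussian kernel the paper uses), and it requires Fubini to interchange expectations with RKHS inner products. Once this embedding is justified, the theorem is essentially a restatement of the definition, and the rest of the paper's use of the theorem only needs to appeal to the Gaussian kernel being characteristic -- a fact that follows from its Fourier transform being strictly positive everywhere, so no nonzero signed measure can be annihilated by convolution with $k$.
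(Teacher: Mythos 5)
The paper does not prove this statement itself---it imports it verbatim from Gretton et al.\ (2012)---and your argument is precisely the standard one from that reference: identify $M_k(P,Q)$ with $\|\mu_P-\mu_Q\|_{\mathcal{H}_k}$ via the mean embedding, then invoke injectivity of $P\mapsto\mu_P$, which is the definition of a characteristic kernel. Your proof is correct, and you rightly flag the only real technical content (existence of the Bochner integrals, i.e.\ $\mathbb{E}_P\sqrt{k(X,X)}<\infty$, which holds for the bounded Gaussian kernel) and the separate fact that the Gaussian kernel is characteristic.
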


\begin{theorem}[Equivalence of optimal solution]\label{optimal_solution}
$G$ is an optimal solution of $\Ls_G$ $\iff$ $G$ is an optimal solution of $\hat{\Ls}_G$
\end{theorem}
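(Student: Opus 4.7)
The plan is to exploit two facts: (i) the characteristic property of the Gaussian kernel, so that by Theorem~\ref{mmd} the BC penalty is nonnegative and vanishes exactly when each pushforward $C(G(\vz,y))$ coincides in law with $C(\vx)$; and (ii) the standing assumption that $\mathcal{L}_G$ is minimized iff the generator's law equals $P_{\mathcal{X}|y}$ for every $y\in\mathcal{Y}$. Combining these, the minimizers of $\mathcal{L}_G$ automatically drive the BC penalty to its floor, so adding the penalty cannot enlarge or shrink the optimum set.

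Concretely, I would first record that the BC term $\frac{\lambda}{|\sC|}\sum_{C}\hat{M}_k(C(X),C(G(\sZ,\mathcal{Y})))$ is nonnegative (working with the population MMD, to which the empirical estimator is unbiased) and equals zero whenever the generator matches $P_{\mathcal{X}|y}$, because measurable pushforwards preserve equality in distribution and then Theorem~\ref{mmd} applies term by term. For the forward direction, let $G^\star$ minimize $\mathcal{L}_G$. By assumption $G^\star(\vz,y)\sim P_{\mathcal{X}|y}$, so the BC penalty is zero at $G^\star$, giving $\hat{\mathcal{L}}_G(G^\star)=\mathcal{L}_G(G^\star)$. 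For any other $G$ we have $\hat{\mathcal{L}}_G(G)\ge \mathcal{L}_G(G)\ge \mathcal{L}_G(G^\star)=\hat{\mathcal{L}}_G(G^\star)$, so $G^\star$ is also optimal for $\hat{\mathcal{L}}_G$.

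For the converse, suppose $G^\dagger$ minimizes $\hat{\mathcal{L}}_G$. Pick any $G^\star$ minimizing $\mathcal{L}_G$; the forward direction already gives $\hat{\mathcal{L}}_G(G^\star)=\min_G \mathcal{L}_G$. Since $G^\dagger$ is a minimizer of $\hat{\mathcal{L}}_G$, we obtain
\begin{equation*}
\min_G \mathcal{L}_G \;\le\; \mathcal{L}_G(G^\dagger) \;\le\; \hat{\mathcal{L}}_G(G^\dagger) \;\le\; \hat{\mathcal{L}}_G(G^\star) \;=\; \min_G \mathcal{L}_G,
\end{equation*}
so equality holds throughout and $G^\dagger$ is optimal for $\mathcal{L}_G$. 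This squeeze, together with the nonnegativity of the BC penalty, is the core of the argument.

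The main obstacle is the mismatch between the empirical estimator $\hat{M}_k$ actually used in $\hat{\mathcal{L}}_G$ and the population functional $M_k$ to which Theorem~\ref{mmd} applies. I would reconcile this by interpreting the training objective in expectation, using the unbiasedness $\mathbb{E}[\hat{M}_k(C(X),C(G(\sZ,\mathcal{Y})))]=M_k(\cdot,\cdot)\ge 0$ with equality exactly at the matched distribution; under that reading the squeeze above goes through unchanged. A secondary subtlety to flag is that the assumption ``$\mathcal{L}_G$ is optimal iff $G(\vz,y)\sim P_{\mathcal{X}|y}$'' is what couples $\mathcal{L}_G$ to the BC penalty, so the theorem should be stated as conditional on this standard GAN optimality property.
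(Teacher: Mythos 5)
Your proof is correct and follows essentially the same route as the paper's: both directions rest on the nonnegativity of the MMD penalty, the characteristic-kernel property from Theorem~\ref{mmd}, and the standing assumption that $\Ls_G$ is minimized iff $G(\vz,y)\sim P_{\mathcal{X}|y}$. If anything, your backward direction is slightly more careful than the paper's, which silently assumes the optimal value of $\Ls_G$ is $0$ (writing $0 \geq \hat{\Ls}_G(G^*)$), whereas your squeeze against $\min_G \Ls_G$ avoids that extra assumption; your explicit handling of the empirical-versus-population MMD mismatch is also a point the paper glosses over.
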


\begin{proof}

$(\Rightarrow)$ According to the assumption, $G$ is an optimal solution of $\Ls_{G}$ implies $G(\mathcal{Z},y)$
recovers $P_{\mathcal{X}|y}$ for all $y\in\mathcal{Y}$.
Therefore, $P_{C(X)} = P_{C(G(\sZ,\mathcal{Y}))}$ and $M_k(P_{C(X)}, P_{C(G(\sZ,\mathcal{Y}))}) = 0$ by Theorem~\ref{mmd}.
Now $\hat{\Ls}_G = \Ls_G + 0 = \Ls_G$ and $G$ is an optimal solution of $\Ls_{G}$, so $G$ is also an optimal solution of $\hat{\Ls}_{G}$.

$(\Leftarrow)$ Since $\Ls_{BC} \geq 0$, we have $\hat{\Ls}_{G} =\Ls_{G} + \Ls_{BC}  \geq \Ls_{G}$. From above, we know
$\hat{\Ls}_G(G) = 0$ if $G = P_{\mathcal{X}}$. Thus, for an optimal solution $G^*$, $0 \geq  \hat{\Ls}_G(G^*) \geq \Ls_G(G^*) \geq 0$, which implies $\hat{\Ls}_G(G^*) = \Ls_G(G^*) = 0$. Therefore, $G^*$ is also an optimal solution of $\Ls_G$.
\end{proof}

The proof shows that the proposed BC-loss does not change the optimal solution of the original optimization problem. However, we can consider BC-loss as a Lagrangian constraint which restricts the solution to a subspace where the generator owns higher model compatibility .

\subsection{Comparison to MMD GAN}
MMD GAN~\cite{DBLP:conf/nips/LiCCYP17} is a variant of GAN where the generator tries to minimize the MMD between generated data and original data and the discriminator learns a kernel which maximizes the MMD. Though the formulation of MMD GAN and BC-loss are similar, they still do not conflict because MMD GAN do not known the information about the classifier and the objective of MMD GAN would not lead the discriminator to a classifier. Therefore, BC-loss may still improve MMD GAN by guiding the generator to not generate points across the boundary. To understand the improvement in MMD GAN from BC-loss, we use MMD GAN as one of the baselines in our experiments.

\section{Experiments}
\label{sec:exp}
In this section, we use a toy dataset to illustrate how the proposed method improves the model compatibility. To be more
realistic, we provide more comprehensive results for four different real-world dataset from UCI dataset
repository~\cite{Dua:2019}: Adult, Connect-4, Covertype and Sensorless. We then show our method is also applicable in
image dataset: MNIST and Cifar10 without losing the image quality.  In addition, we investigate the results of feature
selections on the generated dataset to see whether the generated data can preserve the interpretation of machine learning models.

\subsection{Experimental settings}

\subsubsection*{Evaluation}
\label{subsubsec: eval}
In this work, we focus on model compatibility of generated datasets. We use a wide variety of machine learning algorithms including linear SVM, decision tree (DT), random forest (RF), and multi-layer perception (MLP) to evaluate the model compatibility. As described in Section~\ref{sec:form}, we evaluate the relative accuracy for each type of machine learning model, where the relative accuracy is calculated by dividing the accuracy of classifier trained on generated data to the accuracy of classifier trained on original data.

\subsubsection*{Compared methods}
We take Wasserstein GAN (WGAN) and MMD GAN as our baselines to evaluate the effectiveness of the proposed boundary-calibration technique.
We denotes their counterparts with BC-loss as BWGAN and BMMDGAN respectively.  All of the methods use gradient penalty
to  enforce the Lipschitz constraint on the discriminator~\cite{DBLP:conf/nips/GulrajaniAADC17, DBLP:conf/nips/LiCCYP17}. To achieve conditional
data generation as described in Section~\ref{sec:bmgan}, we add an embedding layer to learn the embedding vector for each class.
The embedding vector is concatenated as additional input features for both generators and discriminators on UCI datasets. For image datasets, the embedding vectors are used as described in \cite{mirza2014conditional}.

\subsection{2D Toy Dataset}

\begin{figure}[t]
        \centering
        \begin{subfigure}[b]{0.245\textwidth}
            \centering
            \includegraphics[width=\textwidth]{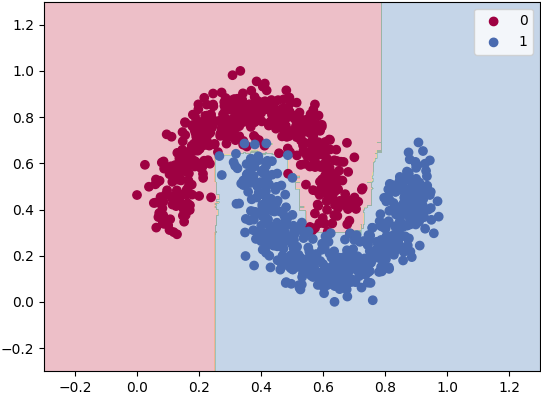}
            \caption{Real\\accuracy = 0.995}
            \label{fig:toy_real}
        \end{subfigure}
        \begin{subfigure}[b]{0.245\textwidth}   
            \centering 
            \includegraphics[width=\textwidth]{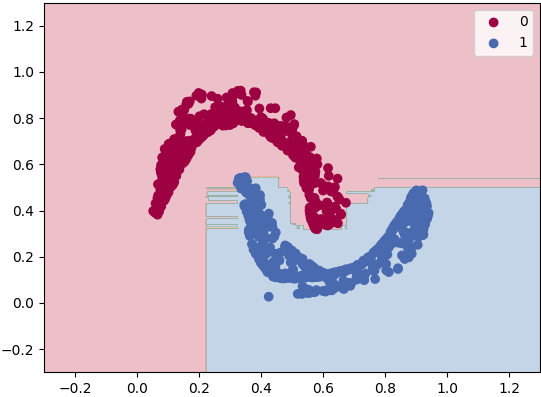}
            \caption{ACGAN\\accuracy = 0.932}
            \label{fig:toy_acgan}
        \end{subfigure}
        \begin{subfigure}[b]{0.245\textwidth}  
            \centering 
            \includegraphics[width=\textwidth]{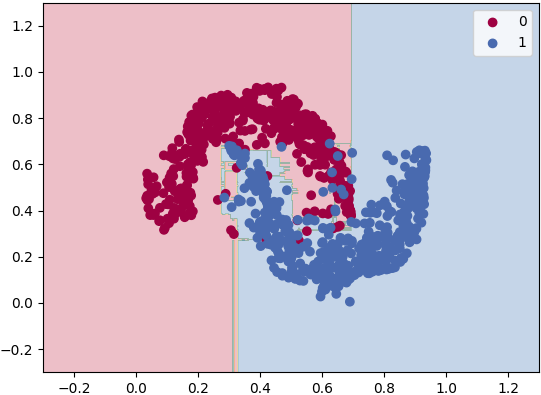}
            \caption{WGAN\\accuracy = 0.979}
            \label{fig:toy_wgan}
        \end{subfigure}
        \begin{subfigure}[b]{0.245\textwidth}   
            \centering 
            \includegraphics[width=\textwidth]{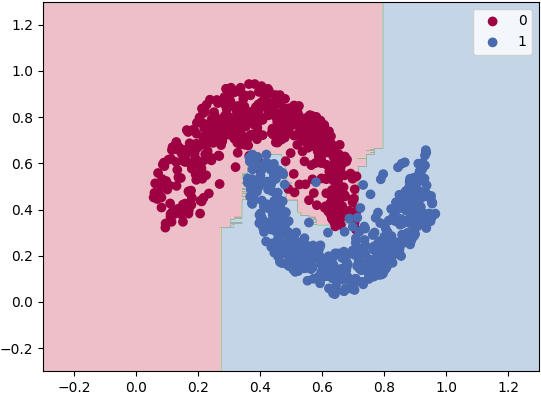}
            \caption{BWGAN (proposed)\\ accuracy = 0.984}
            \label{fig:toy_bwgan}
        \end{subfigure}
        \caption{A toy dataset generated by different GAN methods. Figure (a) is the original training data and the others are data generated by ACGAN, WGAN and our BWGAN respectively. The background color indicates the decision boundary of a random forest trained on corresponding data. The captions show the test accuracy of the random forest.}
        \label{fig:toy}
\end{figure}

We use a 2D toy dataset with two classes to illustrate the results generated by different GAN methods in
Figure~\ref{fig:toy}. Figure~\ref{fig:toy_real} shows the distribution of the original training data. We use these
generated data to train a random forest and depict the decision boundary by different background color. From
Figure~\ref{fig:toy_acgan}, we can see that although ACGAN can make use of the auxiliary classifier during training, it
learns a biased distribution that push the generated data away from the boundary. The large margin between the two
clusters brings more uncertainty to the decision boundary and thus leads to worse test accuracy. In
Figure~\ref{fig:toy_wgan}, WGAN approximates the original distribution well in the center part of the two cluster, but
do not get a clear boundary between the two classes. It generates some ambiguous points near the boundary that would confuse the classifier. Finally, our BWGAN generates points near the boundary more precisely, as shown in Figure~\ref{fig:toy_wgan}.

\subsection {UCI dataset}
We evaluate our proposed BC-GAN on four datasets from UCI repository. The attributes of the datasets can be found in Appendix~\ref{sec:data_info}. Discrete features are processed to one-hot encoding and continuous features are scaled to $[0, 1]$. For each dataset, we train six multi-layer perceptrons with a random split of half of training data as pre-trained classifiers. In these experiments, generators and discriminators are consist of 3 fully-connected hidden layer with 128 units. A logistic function is applied to the output layer of generators to generate features within 0 and 1. The weight of BC-loss is set to be $\lambda=100$ for all datasets.

Table~\ref{tab:uci_summary} summarize the comparison between different methods. We calculate the relative accuracy of different machine learning models mentioned in Section~\ref{subsubsec: eval} and average the relative accuracy to indicate the model compatibility of generated data for each dataset.
The table shows that the proposed BC-loss improves the accuracy of classifiers generally compared to original WGAN and
MMD GAN. Moreover, ACGAN performs worst on three out of four datasets and exhibit a significant deficiency though it is
proved to have the state-of-the-art generation quality. This again prove that the biased objective of ACGAN worsen
the model compatibility seriously.
The breakdown results and real accuracy are provided in Appendix~\ref{sec:uci_detail}.


To further investigate the advantage of boundary-calibration, we visualize the generated results of Sensorless in Figure~\ref{fig:sensorless_2d}. We train a fully-connected neural network with a 2-units hidden layer before the output layer to project the generated samples to a 2-dimensional embedding space. The projection classifier is well-trained and achieves over 99\% testing accuracy so we can use it to determine whether a sample is  generated with incorrect label. The figure shows that there are less mislabeled data generated by BWGAN, especially at the center and the bottom-left region. The fact indicates that boundary-calibration helps GANs generate labeled data more accurately, which may lead to the improvement of classification accuracy.

\begin{table}[t]
\centering
\small
\caption{Summary result of model compatibility evaluate on UCI datasets. The numbers are relative accuracy.}
\label{tab:uci_summary}
\begin{tabular}{lrrrrr}
\toprule
{} &  adult &  connect4 &  covertype &  sensorless &  average \\
\midrule
ACGAN    &  97.78 &     83.71 &      51.98 &       77.47 &    77.74 \\
WGAN     &  96.60 &     87.59 &      79.56 &       84.63 &    87.10 \\
BWGAN    &  \textbf{98.79} &    \textbf{ 88.95} &      \textbf{83.16} &      \textbf{ 93.34} &    \textbf{91.06} \\
\midrule
MMDGAN   &  95.67 &     86.29 &      77.14 &       86.28 &    86.35 \\
BMMDGAN  &  \textbf{97.23} &     \textbf{87.38} &      \textbf{79.82} &       \textbf{88.14} &    \textbf{88.14} \\
\bottomrule
\end{tabular}
\end{table}

\begin{figure}[t]
        \centering
        \begin{subfigure}[b]{0.245\textwidth}
            \centering
            \includegraphics[width=\textwidth]{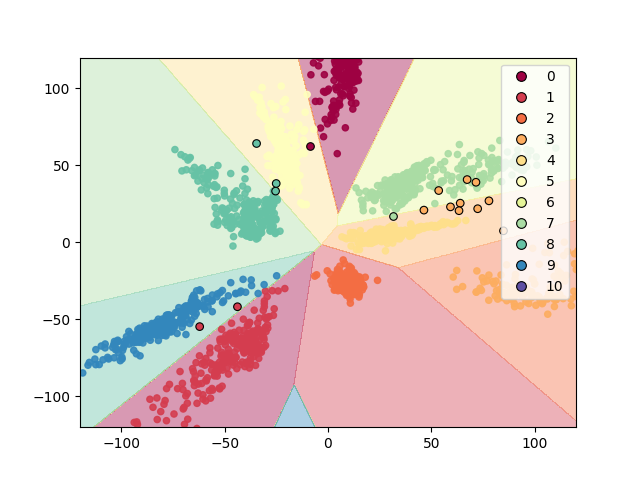}
            \caption{Real\\mislabeled = 0.8\%}
            \label{fig:2d_real}
        \end{subfigure}
        \begin{subfigure}[b]{0.245\textwidth}   
            \centering 
            \includegraphics[width=\textwidth]{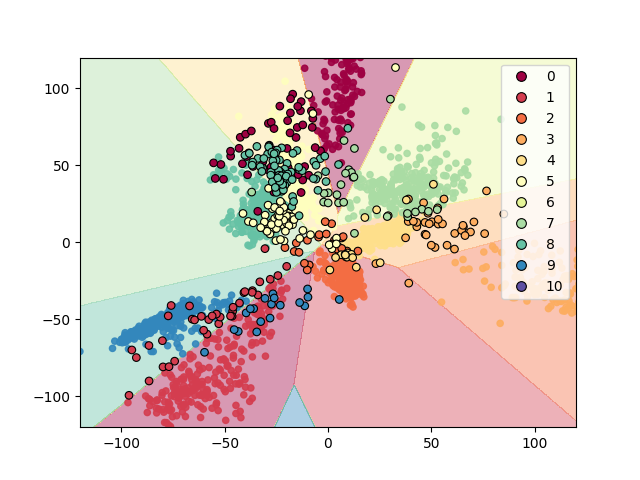}
            \caption{ACGAN\\mislabeled = 12.57\%}
            \label{fig:2d_acgan}
        \end{subfigure}
        \begin{subfigure}[b]{0.245\textwidth}  
            \centering 
            \includegraphics[width=\textwidth]{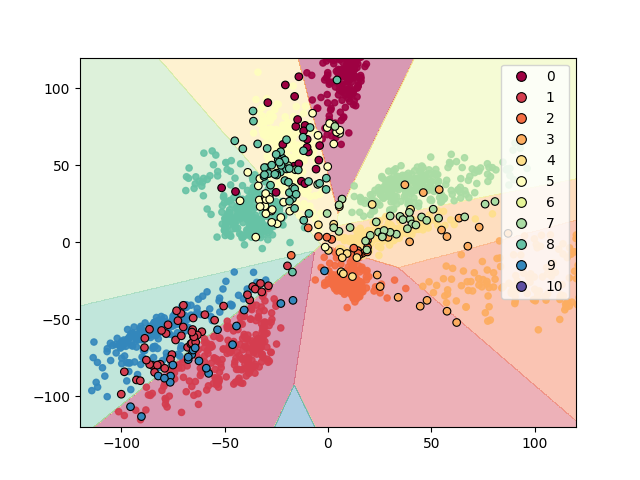}
            \caption{WGAN\\mislabeled = 9.35\%}
            \label{fig:2d_wgan}
        \end{subfigure}
        \begin{subfigure}[b]{0.245\textwidth}   
            \centering 
            \includegraphics[width=\textwidth]{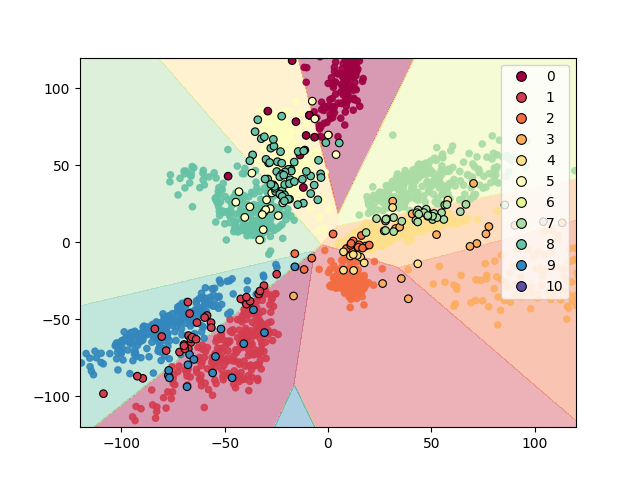}
            \caption{BWGAN (proposed)\\ mislabeled=7.39\%}
            \label{fig:2d_bwgan}
        \end{subfigure}
        \caption{2D visualization of real and generated Sensorless dataset. The mislabeled points are emphasized with border lines. The background color indicates the spaces of each class according to the projection classifier.}
        \label{fig:sensorless_2d}
    \end{figure}

\subsection {Interpretability}

In addition to accuracy, it is also important that the model trained on generated data should give us the same
interpretation of a model trained on the original data. We investigate the interpretability by two common feature
selection techniques. First, we train two random forests on the generated and original dataset respectively. Each random
forest can provide the importances of the features. We evaluate the consistency of interpretation by calculating precision at Kth, which means how many features ranked top-k in random forest trained on original data are in the top-k importance feature of the random forest trained on generated data. The results are shown in Table~\ref{tab:rf_interpret}. We provide the results of training a classifier on the same original data with a different random seed as \textit{REAL} for comparison. The effect of BC-loss is not significant in this aspect. However, the scores of ACGAN drop seriously, which means training a classifier on data generated by ACGAN is somehow dangerous because the meaning of model may be totally different.

 Another way to select feature is training a linear model with $\ell_1$ regularization. In Table~\ref{tab:l1_interpret} we
 use linear SVM with $\ell_1$ regularization to select features. Then we calculate the F1 score of features selected by classifiers trained on generated data to known how similar between the two sets of features selected by classifiers trained on original and generated dataset. The results again shows that  using boundary-calibration does not has significant effect to feature selection and ACGAN is not proper to generated data for training.

\begin{table}[t]
\centering
\small
\caption{Precision at K of feature importance ranking compared to the feature importance ranking obtained from the original dataset}
\label{tab:rf_interpret}
\begin{tabular}{llrrrrrr}
\toprule
           &      &  REAL &  ACGAN &  WGAN &  BWGAN &  MMDGAN &  BMMDGAN \\
dataset & metric &        &        &       &        &         &          \\
\midrule
adult & P@10 &   0.80 &   0.40 &  0.80 &   0.80 &    0.80 &     0.70 \\
           & P@20 &   0.95 &   0.45 &  0.75 &   0.80 &    0.85 &     0.75 \\
           & P@30 &   1.00 &   0.63 &  0.90 &   0.90 &    0.87 &     0.87 \\
connect4 & P@10 &   0.90 &   0.30 &  1.00 &   1.00 &    0.90 &     0.80 \\
           & P@20 &   0.90 &   0.35 &  0.90 &   0.85 &    0.95 &     0.85 \\
           & P@30 &   0.93 &   0.33 &  0.83 &   0.90 &    0.77 &     0.83 \\
covertype & P@10 &   0.80 &   0.30 &  0.60 &   0.80 &    0.30 &     0.60 \\
           & P@20 &   1.00 &   0.50 &  0.75 &   0.85 &    0.90 &     0.85 \\
           & P@30 &   1.00 &   0.70 &  0.93 &   0.87 &    0.87 &     0.87 \\
sensorless & P@10 &   0.80 &   0.60 &  0.80 &   0.90 &    0.70 &     0.70 \\
           & P@20 &   0.95 &   0.50 &  0.75 &   0.85 &    0.80 &     0.85 \\
           & P@30 &   1.00 &   0.80 &  0.90 &   0.90 &    0.87 &     0.83 \\
\bottomrule
\end{tabular}
\end{table}

\begin{table}[t]
\centering
\small
\caption{F1 score of feature selection by $\ell_1$-regularized linear SVM}
\label{tab:l1_interpret}
\begin{tabular}{llrrrrrr}
\toprule
           &      &  REAL &  ACGAN &  WGAN &  BWGAN &  MMDGAN &  BMMDGAN \\
dataset & metric &        &        &       &        &         &          \\
\midrule
adult & f1 (C=0.01) &  0.975 &  0.571 & 0.697 &  0.787 &   0.795 &    0.725 \\
 & f1 (C=0.001) &  0.968 &  0.500 & 0.737 &  0.789 &   0.700 &    0.789 \\
connect4 & f1 (C=0.01) &  0.905 &  0.860 & 0.889 &  0.866 &   0.874 &    0.831 \\
 & f1 (C=0.001) &  0.933 &  0.718 & 0.796 &  0.739 &   0.750 &    0.752 \\
covertype & f1 (C=0.01) &  0.989 &  0.923 & 0.911 &  0.935 &   0.730 &    0.773 \\
 & f1 (C=0.001) &  1.000 &  0.825 & 0.912 &  0.825 &   0.800 &    0.815 \\
sensorless & f1 (C=0.01) &  0.982 &  0.848 & 0.900 &  0.918 &   0.813 &    0.844 \\
 & f1 (C=0.001) &  0.815 &  0.778 & 0.769 &  0.733 &   0.812 &    0.710 \\
\bottomrule
\end{tabular}
\end{table}

\subsection {Image dataset}

\begin{figure}[]
        \centering
        \begin{subfigure}[b]{0.47\textwidth}
            \centering
            \includegraphics[width=\textwidth]{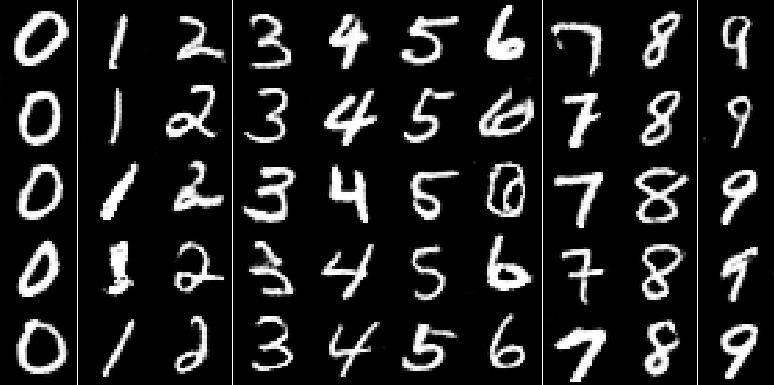}
            \caption{WGAN MNIST}
        \end{subfigure}
        \hfill
        \begin{subfigure}[b]{0.47\textwidth}   
            \centering 
            \includegraphics[width=\textwidth]{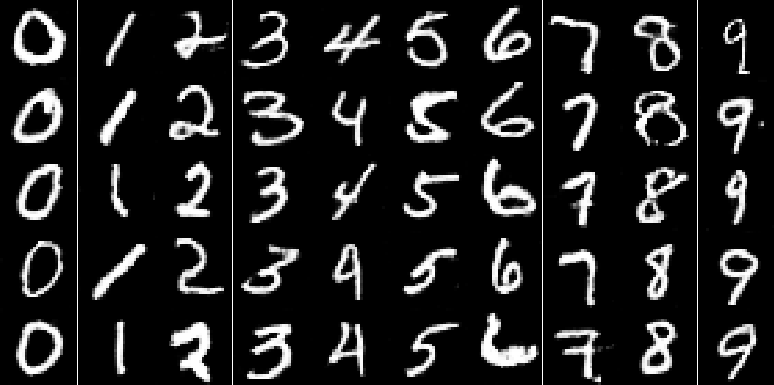}
            \caption{BWGAN MNIST}
        \end{subfigure}
        \par\medskip
        \begin{subfigure}[b]{0.47\textwidth}  
            \centering
            \includegraphics[width=\textwidth]{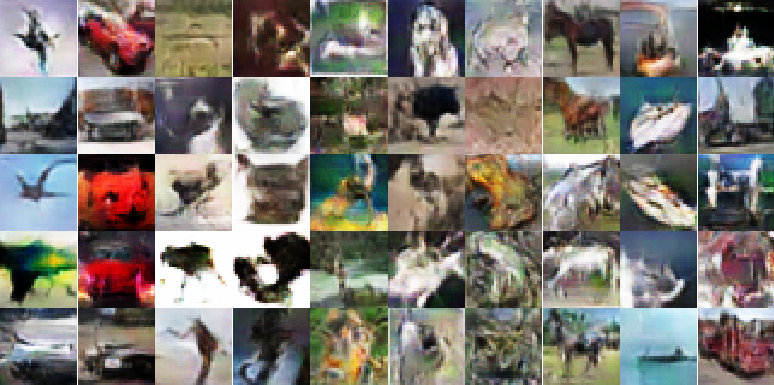}
            \caption{WGAN CIFAR-10 (Inception:~6.00, FID:~49.47)}
        \end{subfigure}
		\hfill
        \begin{subfigure}[b]{0.47\textwidth}   
            \centering 
            \includegraphics[width=\textwidth]{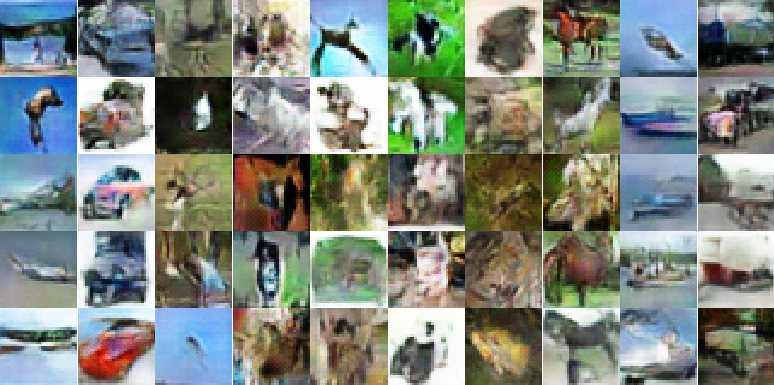}
            \caption{BWGAN CIFAR-10 (Inception:~6.12, FID:~44.93)}
        \end{subfigure}
        \caption{Gnerated samples from WGAN and our BWGAN. The images in the same column are in the same category.}
        \label{fig:real_image}
        \vspace{-1em}
    \end{figure}

We further use MNIST and CIFAR-10 dataset to investigate the effectiveness of boundary-calibration on image datasets. For MNIST, we train six 4-layer convolution neural networks (CNN) with random sampling half of training data as pre-trained classifiers, and use the same classifier set in Section~\ref{subsubsec: eval} to evaluate model compatibility. For CIFAR-10, we use ResNet56v2~\cite{DBLP:conf/eccv/HeZRS16} to obtain three pre-trained classifier and evaluate on CNN and ResNet56v2. In both task, we use DCGAN~\cite{DBLP:journals/corr/RadfordMC15} as network structure in all GANs. The weight of BC-loss is set to be $\lambda=1$ for these two datasets.

Table~\ref{tab:mnist_breakdown} and Table~\ref{tab:cifar10_breakdown} show the relative accuracy of classifiers trained on generated data. The proposed BWGAN still outperforms WGAN with better accuracy in general. The results generated by WGAN and BWGAN are pictured in Figure~\ref{fig:real_image}. The Inception score and Frechet Inception Distance (FID) for CIFAR-10 are also provided in the caption of Figure~\ref{fig:real_image}.
Though the difference of quality between the images generated from WGAN and BWGAN is not significant in visual, the quantitative scores for quality of generated samples of CIFAR-10 are slightly improved. The results indicate that even though our method seems not improve the image quality, it is still able to improve the model compatibility without losing image quality.

%
       
\begin{table}[htb]
 \vspace{-2em}
    \begin{minipage}{.55\linewidth}
\caption{Breakdown results on MNIST dataset.}
\label{tab:mnist_breakdown}
      \centering
      \small
      \setlength\tabcolsep{3pt}
        \begin{tabular}{llllll}
\toprule
{} &            REAL &           WGAN &          BWGAN \\
\midrule
DT (d=10)              &  86.6  &  54.5 (63.0) &  48.3 (55.8)) \\
DT (d=20)              &  88.0  &  34.5 (39.2) &  48.2 (54.8)  \\
Linear SVM             &  88.0  &  34.5 (39.2) &  48.2 (54.8) \\
MLP (100)    &  97.6  &  96.2 (98.5) &  96.8 (99.2) \\
MLP (200x2) &  97.9  &  96.7 (98.8) &  96.1 (98.2) \\
RF (n=10, d=10)        &  92.5  &  75.5 (81.7) &  83.7 (90.6) \\
RF (n=10, d=20)        &  94.7  &  67.5 (71.3) &  71.5 (75.4)\\
\midrule
Avg. &          100.0 &          70.2 &          \textbf{75.5}  \\
\bottomrule
\end{tabular}
    \end{minipage}%
    \hfill
    \begin{minipage}{.45\linewidth}
    \caption{Breakdown results on CIFAR-10 dataset.}
    \label{tab:cifar10_breakdown}
      \centering
      \small
      \setlength\tabcolsep{3pt}
       \begin{tabular}{lllllr}
\toprule
{} &            REAL &           WGAN &          BWGAN \\
\midrule
CNN                   &  70.8 &  63.5 (89.8) &  63.0 (89.1)  \\
Resnet56v2            &  77.5 &  48.8 (62.9) &  51.3 (66.2) \\
\midrule
Avg. &          100.0 &          76.4 &          \textbf{77.6} \\
\bottomrule
\end{tabular}
    \end{minipage} 
\end{table}

\section{Discussion}
We introduce an auxiliary loss in GANs which improves the model compatibility of generated dataset. We prove the new loss is unbiased and is applicable to all variants of GAN to improve model compatibility. We further demonstrate that our method has clear advantages with a variety of machine learning models trained on generated dataset. In addition, we  investigate the results of feature selection and found that the BC-loss doesn't effect the interpretation of machine learning models. While this work only focus on classification problem, generating data for regression problem is also worth studying. We hope our work will open the path for GANs with better model compatibility so that synthetic data can be more useful in practice.


\clearpage
\bibliographystyle{plainnat}
\bibliography{reference}
\clearpage
\appendix

\section{Dataset Information}

\label{sec:data_info}
\begin{table}[!h]
\centering
\caption{Attributes of UCI datasets}
\begin{tabular}{lrrrrr}
\toprule
Dataset & \# of train & \# of test & \# of discrete feature & \# of continuous feature & \# of class\\
\midrule
Adult & 32561 & 16281 & 123 & 0  & 2\\
Connect-4 & 54046 & 13511 & 126 & 0 & 3\\
Covertype & 116203 & 116202 & 44 & 10 & 7\\
Sensorless & 46807 & 11702 & 0 & 48 & 11\\
\bottomrule
\end{tabular}
\end{table}

\section{Detail Result}
\label{sec:uci_detail}
\subsection{Adult}
\begin{table}[h!]
\centering
\small
\setlength{\tabcolsep}{5pt}
\begin{tabular}{lllllll}
\toprule
{} &          REAL &        ACGAN &         WGAN &        BWGAN &       MMDGAN &      BMMDGAN \\
\midrule
DT (d=10)       &  83.6 (100.0) &  80.8 (96.6) &  80.3 (96.0) &  81.9 (97.9) &  79.7 (95.3) &  81.2 (97.1) \\
DT (d=20)       &  81.2 (100.0) &  80.9 (99.6) &  75.9 (93.5) &  79.9 (98.4) &  73.8 (91.0) &  75.9 (93.5) \\
Linear SVM      &  81.2 (100.0) &  80.9 (99.6) &  75.9 (93.5) &  79.9 (98.4) &  73.8 (91.0) &  75.9 (93.5) \\
MLP (100)       &  84.4 (100.0) &  81.8 (96.9) &  83.1 (98.5) &  83.6 (99.1) &  82.9 (98.2) &  84.1 (99.7) \\
MLP (200x2)     &  84.4 (100.0) &  82.0 (97.2) &  83.1 (98.5) &  83.8 (99.3) &  82.6 (97.9) &  84.2 (99.8) \\
RF (n=10, d=10) &  83.9 (100.0) &  82.6 (98.4) &  83.4 (99.4) &  83.6 (99.6) &  83.5 (99.5) &  83.3 (99.3) \\
RF (n=10, d=20) &  84.1 (100.0) &  80.7 (96.0) &  81.5 (96.9) &  83.2 (99.0) &  81.5 (96.9) &  82.3 (97.8) \\
\midrule
Avg.            &         100.0 &         97.8 &         96.6 &         98.8 &         95.7 &         97.2 \\
\bottomrule
\end{tabular}
\end{table}

\subsection{Connect4}
\begin{table}[h!]
\centering
\small
\setlength{\tabcolsep}{5pt}
\begin{tabular}{lllllll}
\toprule
{} &          REAL &        ACGAN &         WGAN &        BWGAN &       MMDGAN &      BMMDGAN \\
\midrule
DT (d=10)       &  74.7 (100.0) &  64.1 (85.8) &  67.7 (90.7) &  69.5 (93.1) &  66.7 (89.3) &  68.5 (91.7) \\
DT (d=20)       &  76.3 (100.0) &  64.1 (83.9) &  60.9 (79.8) &  64.9 (85.1) &  61.4 (80.4) &  64.1 (84.0) \\
Linear SVM      &  76.3 (100.0) &  64.1 (83.9) &  60.9 (79.8) &  64.9 (85.1) &  61.4 (80.4) &  64.1 (84.0) \\
MLP (100)       &  84.2 (100.0) &  66.0 (78.4) &  74.7 (88.7) &  73.7 (87.6) &  72.2 (85.8) &  72.8 (86.5) \\
MLP (200x2)     &  85.7 (100.0) &  66.6 (77.7) &  74.5 (86.9) &  73.7 (86.0) &  71.9 (83.8) &  72.9 (85.0) \\
RF (n=10, d=10) &  73.0 (100.0) &  66.8 (91.6) &  71.4 (97.8) &  70.4 (96.4) &  70.1 (96.0) &  68.3 (93.6) \\
RF (n=10, d=20) &  79.2 (100.0) &  67.0 (84.6) &  70.7 (89.3) &  70.8 (89.4) &  69.8 (88.1) &  68.9 (87.0) \\
\midrule
Avg.            &         100.0 &         83.7 &         87.6 &         88.9 &         86.3 &         87.4 \\
\bottomrule
\end{tabular}
\end{table}

\subsection{Covertype}
\begin{table}[h!]
\centering
\small
\setlength{\tabcolsep}{5pt}
\begin{tabular}{lllllll}
\toprule
{} &          REAL &        ACGAN &         WGAN &        BWGAN &       MMDGAN &      BMMDGAN \\
\midrule
DT (d=10)       &  77.1 (100.0) &  37.8 (49.1) &  65.6 (85.1) &  68.9 (89.4) &  65.5 (84.9) &  66.7 (86.5) \\
DT (d=20)       &  86.9 (100.0) &  39.4 (45.3) &  62.9 (72.3) &  67.2 (77.3) &  62.2 (71.6) &  63.4 (73.0) \\
Linear SVM      &  86.9 (100.0) &  39.4 (45.3) &  62.9 (72.3) &  67.2 (77.3) &  62.2 (71.6) &  63.4 (73.0) \\
MLP (100)       &  80.6 (100.0) &  54.0 (67.0) &  67.5 (83.8) &  69.9 (86.7) &  62.5 (77.5) &  66.2 (82.2) \\
MLP (200x2)     &  89.2 (100.0) &  53.6 (60.1) &  66.5 (74.5) &  68.3 (76.5) &  58.8 (65.9) &  65.1 (73.0) \\
RF (n=10, d=10) &  73.8 (100.0) &  38.6 (52.2) &  67.2 (91.0) &  69.5 (94.1) &  68.0 (92.1) &  67.8 (91.8) \\
RF (n=10, d=20) &  86.2 (100.0) &  38.6 (44.8) &  67.2 (77.9) &  69.7 (80.8) &  65.9 (76.4) &  68.3 (79.2) \\
\midrule
Avg.            &         100.0 &         52.0 &         79.6 &         83.2 &         77.1 &         79.8 \\
\bottomrule
\end{tabular}
\end{table}

\subsection{Sensorless}
\begin{table}[h!]
\centering
\small
\setlength{\tabcolsep}{5pt}
\begin{tabular}{lllllll}
\toprule
{} &          REAL &        ACGAN &         WGAN &        BWGAN &       MMDGAN &      BMMDGAN \\
\midrule
DT (d=10)       &  96.3 (100.0) &  75.8 (78.7) &  77.9 (80.8) &  87.2 (90.5) &  82.3 (85.5) &  82.6 (85.8) \\
DT (d=20)       &  98.4 (100.0) &  75.4 (76.7) &  68.3 (69.5) &  89.6 (91.1) &  81.3 (82.6) &  83.8 (85.2) \\
Linear SVM      &  98.4 (100.0) &  75.4 (76.7) &  68.3 (69.5) &  89.6 (91.1) &  81.3 (82.6) &  83.8 (85.2) \\
MLP (100)       &  93.6 (100.0) &  72.8 (77.8) &  87.8 (93.8) &  89.8 (96.0) &  82.4 (88.0) &  84.9 (90.6) \\
MLP (200x2)     &  98.7 (100.0) &  76.8 (77.8) &  90.5 (91.7) &  93.7 (94.9) &  85.4 (86.5) &  87.7 (88.8) \\
RF (n=10, d=10) &  98.4 (100.0) &  76.2 (77.5) &  92.1 (93.6) &  92.5 (94.1) &  87.9 (89.4) &  88.7 (90.2) \\
RF (n=10, d=20) &  99.8 (100.0) &  77.1 (77.3) &  93.3 (93.6) &  95.6 (95.8) &  89.2 (89.4) &  90.8 (91.0) \\
\midrule
Avg.            &         100.0 &         77.5 &         84.6 &         93.3 &         86.3 &         88.1 \\
\bottomrule
\end{tabular}
\end{table}

\end{document}